\documentclass[12pt]{jmlr} 


\usepackage[utf8]{inputenc} 
\usepackage[T1]{fontenc}    
\usepackage{hyperref}       
\usepackage{url}            
\usepackage{booktabs}       
\usepackage{amsfonts}       
\usepackage{nicefrac}       
\usepackage{microtype}      

\usepackage{amsmath}
\usepackage{amssymb}
\usepackage{comment}

\newlength{\minipagewidth}
\setlength{\minipagewidth}{\linewidth}
\setlength{\fboxsep}{3mm}
\addtolength{\minipagewidth}{-\fboxrule}
\addtolength{\minipagewidth}{-\fboxrule}
\addtolength{\minipagewidth}{-\fboxsep}
\addtolength{\minipagewidth}{-\fboxsep}

\newcommand{\X}{\mathcal{X}}

\newcommand{\A}{\mathcal{A}}

\newcommand{\real}{\mathbb{R}}

\newcommand{\Xw}{\mathcal{X}}
\newcommand{\Aw}{\mathcal{A}}

\newcommand{\II}[1]{\mathbb{I}_{\left\{#1\right\}}}

\newcommand{\EE}[1]{\mathbb{E}\left[#1\right]}

\def\argmax{\mathop{\mbox{ arg\,max}}}
\newcommand{\ra}{\rightarrow}

\newcommand{\norm}[1]{\left\|#1\right\|}

\newcommand{\infnorm}[1]{\norm{#1}_\infty}
\newcommand{\ev}[1]{\left\{#1\right\}}
\newcommand{\pa}[1]{\left(#1\right)}

\newcommand{\wt}{\widetilde}
\newcommand{\transpose}{^\top}

\newcommand{\DD}[3]{D_{#3}\left(#1\middle\lVert#2\right)}
\newcommand{\DDC}[2]{\DD{#1}{#2}{C}}
\newcommand{\DDS}[2]{\DD{#1}{#2}{S}}

\newcommand{\trho}{\wt{\rho}}

\newcommand{\sa}{\mu} 
\newcommand{\sd}{\nu} 

\usepackage{todonotes}
\definecolor{PalePurp}{rgb}{0.66,0.57,0.66}

\newtheorem{assumption}{Assumption}

\title[A unified view of entropy-regularized MDPs]{A Unified View of Entropy-Regularized \\ Markov Decision Processes}

\author{\Name{Gergely Neu} \Email{gergely.neu@gmail.com}\\
 \addr Universitat Pompeu Fabra, Barcelona, Spain
 \AND
 \Name{Vicen\c c  G\'omez} \Email{vicen.gomez@upf.edu}\\
 \addr Universitat Pompeu Fabra, Barcelona, Spain
 \AND
 \Name{Anders Jonsson} \Email{anders.jonsson@upf.edu}\\
 \addr Universitat Pompeu Fabra, Barcelona, Spain
}

\begin{document}

\maketitle

\begin{abstract}
  We propose a general framework for entropy-regularized average-reward reinforcement learning in Markov decision processes (MDPs). 
  Our approach is based on extending the linear-programming formulation of policy optimization in MDPs to accommodate convex regularization 
functions. Our key result is showing that using the conditional entropy of the joint state-action distributions as regularization yields a 
dual optimization problem closely resembling the Bellman optimality equations. This result enables us to formalize a number of 
state-of-the-art entropy-regularized reinforcement learning algorithms as approximate variants of Mirror Descent or Dual Averaging, and thus
to argue about the convergence properties of these methods. In particular, we show that the exact version of the TRPO algorithm of 
\citet{schulman2015trust} actually converges to the optimal policy, while the entropy-regularized policy gradient methods of 
\citet{mnih2016asynchronous} may fail to converge to a fixed point. Finally, we illustrate empirically the effects of using various 
regularization techniques on learning performance in a simple reinforcement learning setup.

\end{abstract}

\section{Introduction}\label{sec:intro}
Reinforcement learning is the discipline of model-based optimal sequential decision-making in unknown stochastic environments. In 
average-reward reinforcement learning, the goal is to find a behavior policy that maximizes the long-term average reward, taking into 
account the effect of each decision on the future evolution of the decision-making process. In known environments, this 
optimization problem has been studied (at least) since the influential work of \citet{bellman57} and \citet{howard60}: the optimal 
behavior policy can be formulated as the solution of the Bellman optimality equations. In unknown environments with partially known or 
misspecified models, greedily solving these equations often results in policies that are far from optimal in the 
true environment. Rooted in statistical learning theory \citep{Vap13}, the notion of \emph{regularization} offers a principled way of 
dealing with this issue, among many others. In particular, \emph{entropy regularization} has proven to be one of the most 
successful tools of machine learning and related fields \citep{LW94,Vov90,FS97,KW01,ArHaKa12}. 

The idea of entropy regularization has also been used extensively in the reinforcement learning literature \citep{sutton,Sze10}. 
Entropy-regularized variants of the classic Bellman equations and the entailing reinforcement-learning algorithms have been proposed to 
induce safe exploration \citep{foxUAI16} and risk-sensitive policies \citep{How72,marcus1997risk,Rus10}, or to model observed behavior of 
imperfect decision-makers \citep{ZBD10,Zie10,braun2011path}, among others. Complementary to these approaches rooted in dynamic 
programming, another line of work proposes direct policy search methods attempting to optimize various entropy-regularized objectives 
\citep{WP91,peters2010relative,schulman2015trust,mnih2016asynchronous,o2017pgq}, with the main goal of driving a safe online exploration 
procedure
in an unknown Markov decision process. Notably, the state-of-the-art methods of \citet{mnih2016asynchronous} and \citet{schulman2015trust} 
are both based on entropy-regularized policy search.

In this work, we connect these two seemingly disparate lines of work by showing a strong Lagrangian duality between the entropy-regularized 
Bellman equations and a certain regularized average-reward objective. Specifically, we extend the linear-programming formulation of the 
problem of optimization in MDPs to accommodate convex regularization functions, resulting in a convex program. We 
show that using the \emph{conditional entropy} of the joint state-action distribution gives rise to a set of nonlinear equations resembling 
the Bellman optimality equations. Observing this duality enables us to establish a connection between regularized versions of value and 
policy iteration methods \citep{PS78} and incremental convex optimization methods like Mirror Descent \citep{NY83,BT03} or Dual Averaging 
\citep{xiao2010dual,mcmahan2014survey,Haz16,SS12}. For instance, the convex-optimization view we propose reveals that the TRPO algorithm of 
\citet{schulman2015trust} and the regularized policy-gradient method of \citet{mnih2016asynchronous} are approximate versions of 
Mirror Descent and Dual Averaging, respectively, and that both can be interpreted as regularized policy iteration methods. 

Our work provides a theoretical justification for various algorithms that were
first derived heuristically. In particular, our framework reveals that the
exact version of TRPO is \emph{identical} to the MDP-E algorithm of
\citet{even-dar09OnlineMDP}. This establishes the fact that the policy updates
of TRPO converge to the optimal policy, improving on the theoretical results
claimed by \citet{schulman2015trust}. We also argue that our formulation is
useful for pointing out possible inconsistencies of heuristic learning
algorithms. In particular, we show that the approximation steps employed by
\citet{mnih2016asynchronous} may break the convexity of the objective, thus
possibly leading to convergence to bad local optima or even divergence. This
observation is in accordance with the very recent findings of~\citet{AL16}, who
show that value iteration with poorly chosen approximate updates may lead to
divergence.  To complement these results, we suggest an alternative objective
that can be optimized consistently, avoiding the possibility of diverging.

A similar Lagrangian duality between the Bellman equations and entropy maximization has been previously noted by \citet[Sec.~5.2]{Zie10} 
and \citet{rawlik2012stochastic} for a special class of \emph{episodic} Markov decision processes where the time index within the episode 
is part of the state representation. In this particular setting, the convexity of the conditional entropy is more obvious. One of our key 
observations is pointing out the convexity of the conditional entropy of distribution functions defined over general state spaces, which 
enables us to develop a much broader theory of 
regularized Markov decision processes. We note that our theory also readily extends to discounted MDPs by replacing the stationary 
state-action distributions we consider by \emph{discounted state-action occupancy measures}. For consistency, we will discuss each 
particular algorithm in their most natural average-reward version, noting that all conclusions remain valid in the simpler discounted and 
episodic settings.

The rest of the paper is organized as follows. In Section~\ref{sec:prelim}, we provide background on average-reward Markov decision 
processes, briefly discussing both linear-programming and dynamic-programming derivations of the optimal control. In 
Section~\ref{sec:regmdp}, we provide a convex-programming formulation of regularized average-reward Markov decision processes, and show 
the connection to the regularized Bellman equations. Section~\ref{sec:dp} provides a brief summary of the complementary 
dynamic-programming formulation and discusses regularized equivalents of related concepts, such as expressions of the regularized policy 
gradient. In Section~\ref{sec:algs}, we describe several existing learning algorithms in our framework.
We provide an empirical evaluation of various regularization schemes in a simple reinforcement learning problem
in Section~\ref{sec:exp}. 

\paragraph{Notation.} Given a finite set $\mathcal{S}$, we will often use $\sum_s$ as shorthand for $\sum_{s\in\mathcal{S}}$, and we use 
$\Delta(\mathcal{S})=\{\mu\in\real^{\mathcal{S}}: \sum_s\mu(s)=1, \mu(s)\geq 0 \, \pa{\forall s}\}$ to denote the set of all 
probability 
distributions on $\mathcal{S}$.

\section{Preliminaries on Markov decision processes}\label{sec:prelim}
We consider a finite Markov decision process (MDP) $M = \pa{\Xw,\Aw,P,r}$, where $\X$ is the finite state space, $\A$ is the finite action 
space, $P:\X\times\A\times\X\ra[0,1]$ is the transition function, with $P(y|x,a)$ denoting the probability of moving to state $y$ from 
state 
$x$ when taking action $a$, and $r:\X\times\A\ra\mathbb{R}$ is the reward function mapping state-action pairs to rewards.

In each round $t$, the learner observes state $X_t\in\Xw$,
selects action $A_t\in\Aw$, moves to the next state $X_{t+1}\sim P(\cdot|X_t,A_t)$,
and obtains reward $r(X_t,A_t)$.
The goal is to select
actions as to maximize some notion of cumulative reward.
In this paper we consider the
\emph{average-reward} criterion
$\liminf_{T\ra \infty}\EE{\frac{1}{T}\sum_{t=1}^T r_t(X_t,A_t)}$.
A \emph{stationary state-feedback policy} (or \emph{policy} for short)
defines a probability distribution $\pi(\cdot|x)$ over the learner's actions in state $x$. 
MDP theory (see, e.g., \citet{Puterman:1994}) stipulates that under mild conditions, the average-reward criterion 
can be maximized by stationary policies.
Throughout the paper, we make the following mild assumption about the MDP:
\begin{assumption}\label{ass:unichain}
 The MDP $M$ is \emph{unichain}: All stationary policies $\pi$ induce a unique stationary distribution $\nu_\pi$ over 
the state space satisfying $\nu_\pi(y) = \sum_{x,a} P(y|x,a) \pi(a|x) \nu_{\pi}(x)$  for all $y\in\X$.
\end{assumption}
In particular, this assumption is satisfied if all policies induce an irreducible and aperiodic Markov chain \citep{Puterman:1994}. For 
ease 
of exposition in this section, we also make the following simplifying assumption:
\begin{assumption}\label{ass:recurrent}
 The MDP $M$ admits a single recurrent class: All stationary policies $\pi$ induce stationary distributions 
strictly supported  on the same set $\Xw'\subseteq\Xw$.
\end{assumption}
In general, this assumption is very restrictive in that it does not allow policies to cover different parts of the state space. We stress
that our results in the later sections \emph{do not} require this assumption to hold.
With the above assumptions in mind, we can define the average reward of any policy $\pi$ as
\[
 \rho(\pi) = \lim_{T\ra \infty} \EE{\frac{1}{T}\sum_{t=1}^T r_t(X_t,A_t)},
\]
where $A_t\sim\pi(\cdot|X_t)$ in each round $t$ and the existence of the limit is ensured by Assumption~\ref{ass:unichain}. Furthermore, the 
average reward of any policy $\pi$ can be 
simply written as $\rho(\pi) = \sum_{x,a} \nu_\pi(x) \pi(a|x) r(x,a)$, which is a linear function of the stationary state-action 
distribution $\mu_\pi=\nu_\pi \pi$. This suggests that finding the optimal policy can be equivalently written as a linear program (LP) where 
the decision 
variable is the stationary state-action distribution. Defining the set of all feasible stationary distributions as
\begin{equation}\label{eq:Delta}
 \Delta= \ev{\mu\in\Delta(\X\times\A): \sum_b \mu(y,b) = \sum_{x,a} P(y|x,a) \mu(x,a) \;\;\pa{\forall y}},
\end{equation}
the problem of maximizing the average reward can be written as
\begin{equation}\label{eq:LPdual}
 \mu^* = \argmax_{\mu\in\Delta} \rho(\mu).
\end{equation}
Just as a policy $\pi$ induces stationary distributions $\nu_\pi$ and $\mu_\pi$, a stationary distribution $\mu$ induces a state 
distribution $\nu_\mu$ defined as $\nu_\mu(x)=\sum_a\mu(x,a)$ and a policy $\pi_\mu$ defined as $\pi_\mu(a|x) = \mu(x,a)/\nu_\mu(x)$, where 
the denominator is strictly positive for recurrent states by Assumption~\ref{ass:recurrent}.
Since $\Delta$ is a compact polytope (non-empty by Assumption~\ref{ass:unichain}) the maximum in~\eqref{eq:LPdual} is well-defined 
and induces an optimal policy $\pi_{\mu^*}$ in recurrent states. Due to Assumption~\ref{ass:recurrent}, $\pi_{\mu^*}$ can be 
arbitrarily defined in transient states.

The linear program specified in Equation~\eqref{eq:LPdual} is well studied in the MDP literature (see, 
e.g., \citealp[Section 8.8]{Puterman:1994}), although most commonly as the dual of the linear program
 \begin{eqnarray}
 &\min_{\rho \in\real} &\rho \label{eq:LPprimal_obj}
 \\
 &\mbox{subject to} & \rho + V(x) - \sum_y P(y|x,a) V(y) \ge r(x,a), \qquad\forall(x,a). \label{eq:LPprimal_const}
 \end{eqnarray}
Here, the dual variables $V$ are commonly referred to as the \emph{value functions}.
 By strong LP duality and our Assumption~\ref{ass:unichain}, the solution to this LP equals the optimal average reward $\rho^*$ and the dual 
variables $V^*$ at the optimum
are the solution to the \emph{average-reward Bellman optimality equations}
\begin{equation}\label{eq:Bellman}
 V^*(x) = \max_a \pa{r(x,a) - \rho^* + \sum_y P(y|x,a) V^*(x)}, \qquad\pa{\forall x}.
\end{equation}
Note that $V^*$ is not unique as for any solution $V$, a constant shift $V - c$ for any $c\in\real$ is also a solution.
However, we can obtain a unique solution $V^*$ by imposing the additional constraint $\sum_{x,a} \mu^*(x,a) V^*(x)=0$, which states that the 
expected value should equal $0$.

\section{Regularized MDPs: A convex-optimization view}\label{sec:regmdp}
Inspired by the LP formulation of the average-reward optimization problem~\eqref{eq:LPdual}, we now define a regularized optimization 
objective---a framework that will lead us to our main results. Our results in this section only require the mild 
Assumption~\ref{ass:unichain}.
Our regularized optimization problem takes the form
\begin{align}
\max_{\sa\in\Delta} \;\; \trho_\eta(\mu) = \max_{\sa\in\Delta} \ev{\sum_{x,a}\sa(x,a)r(x,a) - \frac 1 \eta R(\sa)},\label{eq:primal}
\end{align}
where $R(\mu):\real^{\X\times\A} \ra \real$ is a convex regularization function and $\eta > 0$ is a \emph{learning rate} that trades off 
the original objective and regularization. Note that $\eta=\infty$ recovers the unregularized objective.
Unlike previous work on LP formulations for MDPs, we find it useful to regard \eqref{eq:primal} as the \emph{primal}.

We focus on two families of regularization functions: the \emph{negative Shannon entropy} of $(X,A)\sim \mu$,
\begin{equation}
 R_S(\mu) = \sum_{x,a} \mu(x,a) \log \mu(x,a),
\end{equation}
and the \emph{negative conditional entropy} of $(X,A)\sim \mu$,
\begin{equation}
 R_C(\mu) = \sum_{x,a} \mu(x,a) \log \frac{\mu(x,a)}{\sum_b \mu(x,b)} = \sum_{x,a} \nu_\mu(x) \pi_\mu(a|x) \log \pi_\mu(a|x).
\end{equation}
In what follows, we refer to these functions as the relative entropy and the conditional entropy. 
We also make use of the Bregman divergences induced by $R_S$ and $R_C$ which take the respective forms
\[
 \DDS{\mu}{\mu'} = \sum_{x,a} \mu(x,a) \log \frac{\mu(x,a)}{\mu'(x,a)}
\qquad \mbox{and} \qquad 
 \DDC{\mu}{\mu'} = \sum_{x,a} \mu(x,a) \log \frac{\pi_\mu(a|x)}{\pi_{\mu'}(a|x)}.
\]
While the form of $D_S$ is standard (it is the relative entropy between two state-action distributions), the fact that $D_C$ is the Bregman 
divergence of $R_C$ (or even that $R_C$ is convex) is not immediately obvious\footnote{In the special case of loop-free episodic 
environments, showing the convexity of $R_C$ is straightforward \citep{lafferty01conditional,Zie10,rawlik2012stochastic}.}. 
The following proposition asserts this statement, which we prove in Appendix~\ref{app:convexity}. The only work we are aware of that 
establishes a comparable result is the recent paper of~\citet{NG17}.
\begin{proposition}\label{prop:convexity}
 The Bregman divergence corresponding to the conditional entropy $R_C$ is $D_C$. Furthermore, $D_C$ is nonnegative on $\Delta$, implying  
that $R_C$ is convex and $D_C$ is convex in its first argument.
\end{proposition}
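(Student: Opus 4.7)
The plan is to verify the identity $D_{R_C}(\mu,\mu')=D_C(\mu,\mu')$ by direct computation of the gradient of $R_C$, then exploit the identity to deduce both nonnegativity and convexity in one stroke.

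First I would compute $\nabla R_C$. Writing $R_C(\mu)=\sum_{x,a}\mu(x,a)\log\mu(x,a)-\sum_x\nu_\mu(x)\log\nu_\mu(x)$ (using $\nu_\mu(x)=\sum_b\mu(x,b)$), and differentiating with respect to $\mu(x,a)$, the $+1$ terms coming from the two pieces cancel and what remains is $\log\mu(x,a)-\log\nu_\mu(x)=\log\pi_\mu(a|x)$. Hence $[\nabla R_C(\mu)](x,a)=\log\pi_\mu(a|x)$. Substituting this into the definition $D_{R_C}(\mu,\mu')=R_C(\mu)-R_C(\mu')-\langle\nabla R_C(\mu'),\mu-\mu'\rangle$ and using that $R_C(\mu)=\sum_{x,a}\mu(x,a)\log\pi_\mu(a|x)$, the $R_C(\mu')$ and $\sum_{x,a}\mu'(x,a)\log\pi_{\mu'}(a|x)$ terms cancel, leaving
\[
D_{R_C}(\mu,\mu')=\sum_{x,a}\mu(x,a)\log\frac{\pi_\mu(a|x)}{\pi_{\mu'}(a|x)}=D_C(\mu,\mu'),
\]
which establishes the first claim.

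Next I would show nonnegativity of $D_C$ on $\Delta$. The key is to regroup by state: using $\mu(x,a)=\nu_\mu(x)\pi_\mu(a|x)$,
\[
D_C(\mu,\mu')=\sum_x \nu_\mu(x)\sum_a \pi_\mu(a|x)\log\frac{\pi_\mu(a|x)}{\pi_{\mu'}(a|x)}=\sum_x \nu_\mu(x)\,\KL{\pi_\mu(\cdot|x)}{\pi_{\mu'}(\cdot|x)}.
\]
Each inner KL is nonnegative by Gibbs' inequality, and $\nu_\mu(x)\geq 0$, so $D_C(\mu,\mu')\geq 0$. The only delicate point is states with $\nu_\mu(x)=0$, where $\pi_\mu(\cdot|x)$ is undefined; but then all terms $\mu(x,a)\log(\cdot)$ vanish under the convention $0\log 0=0$, so these states contribute nothing.

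Finally, since $D_{R_C}(\mu,\mu')=D_C(\mu,\mu')\geq 0$ on $\Delta$, the first-order inequality $R_C(\mu)\geq R_C(\mu')+\langle\nabla R_C(\mu'),\mu-\mu'\rangle$ holds for all $\mu,\mu'\in\Delta$, which is the standard characterization of convexity of $R_C$. For convexity of $D_C$ in its first argument, note $D_C(\mu,\mu')=R_C(\mu)-R_C(\mu')-\langle\log\pi_{\mu'},\mu-\mu'\rangle$, which is $R_C(\mu)$ plus an affine function of $\mu$ (with $\mu'$ fixed), hence convex. The main obstacle is really the gradient computation, since $R_C$ naively looks like a difference of convex functions; once one recognizes that the derivative of the $\nu_\mu\log\nu_\mu$ correction exactly cancels the $+1$ coming from differentiating $\mu\log\mu$, the remaining manipulations are bookkeeping.
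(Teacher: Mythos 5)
Your proof is correct and follows essentially the same route as the paper's: compute $\nabla R_C(\mu)(x,a)=\log\pi_\mu(a|x)$, identify the Bregman divergence with $D_C$, recognize $D_C$ as a $\nu_\mu$-weighted sum of per-state relative entropies (hence nonnegative), and conclude convexity via the first-order characterization. Your explicit handling of states with $\nu_\mu(x)=0$ and the observation that $D_C(\cdot,\mu')$ is $R_C$ plus an affine function are small refinements the paper leaves implicit, but the argument is the same.
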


We proceed to derive the dual functions and optimal solutions to \eqref{eq:primal} for our two choices of regularization functions.
Without loss of generality, we assume that the reference 
policy $\pi_{\mu'}$ has full support, which implies that the corresponding stationary distribution $\mu'$ is strictly positive on the 
recurrent 
set $\Xw'$. We only provide the derivations for the Bregman divergences; the calculations are analogous for $R_S$ and $R_C$. Both of these 
solutions will be expressed with the help of dual variables $V:\real^\X \ra \real$ which are useful to think about as 
\emph{value functions}, as in the case of the LP formulation~\eqref{eq:LPdual}. We also define the 
corresponding \emph{advantage functions} $A(x,a) = r(x,a)+\sum_y P(y|x,a) V(y) - V(x)$.

\subsection{Relative entropy}
The choice $R=\DDS{\cdot}{\mu'}$ has been studied before by \citet{peters2010relative} and \citet{ZiNe13}; we defer the proofs to 
Appendix~\ref{app:shannon}.
The optimal state-action distribution for a given value of $\eta$ is
\begin{align}\label{eq:optsad}
\sa^*_\eta(x,a) &\propto \sa'(x,a) e^{\eta A^*_\eta(x,a)}, 
\end{align}
where $A^*_\eta$ is the advantage function for the optimal dual variables $V^*_\eta$. The dual function is
\begin{equation}
 g(V) =  \frac 1 \eta \log\sum_{x,a}\sa'(x,a)e^{\eta A(x,a)},\label{eq:reldual}
\end{equation}
that now needs to be minimized on $\real^\X$ with no constraints in order to obtain $V^*_\eta$. By 
strong duality, $g$ is convex in $V$ and takes the value $\trho_\eta^* = \max_{\mu\in\Delta} \trho_\eta(\mu)$ at its optimum. 
 
\subsection{Conditional entropy}
The choice $R=\DDC{\cdot}{\mu'}$ leads to our main contributions.
Similar to above, the optimal policy is
\begin{align}\label{eq:optpol}
\pi_\eta^*(a|x) \propto \pi_{\mu'}(a|x) e^{\eta A_\eta^*(x,a)}.
\end{align}
In this case, the dual problem closely resembles the average-reward Bellman optimality equations~\eqref{eq:Bellman}:
\begin{proposition}\label{prop:conddual}
 The dual of the optimization problem \eqref{eq:primal} when $R = \DDC{\cdot}{\mu'}$ is given by
 \begin{eqnarray*}
 &\min_{\lambda \in\real} & \lambda \label{eq:CRdual_obj}
 \\
 &subject\;to & V(x) = \frac 1 \eta \log\sum_a\pi_{\mu'}(a|x)\exp\pa{\eta\pa{r(x,a) - \lambda + \sum_y P(y|x,a) V(y)}},\;\pa{\forall 
x}. \label{eq:CRdual_const}
 \end{eqnarray*}
\end{proposition}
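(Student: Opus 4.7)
The plan is a direct Lagrangian duality calculation for the convex program~\eqref{eq:primal} with $R = \DDC{\cdot}{\mu'}$. The primal is a convex program---the objective is concave in $\mu$ by Proposition~\ref{prop:convexity}, and $\Delta$ is cut out by linear equalities plus nonnegativity---so strong duality will hold. I attach a multiplier $V(y)$ to each flow equality and a multiplier $\lambda$ to the normalization $\sum_{x,a}\mu(x,a)=1$, keeping $\mu \geq 0$ explicit. Regrouping the coefficients of $\mu(x,a)$ and using the advantage function $A(x,a) = r(x,a) + \sum_y P(y|x,a)V(y) - V(x)$, the Lagrangian takes the clean form
\[
 L(\mu, V, \lambda) = \sum_{x,a}\mu(x,a)\bpa{A(x,a) - \lambda} - \frac{1}{\eta}\DDC{\mu}{\mu'} + \lambda.
\]

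Next I would compute $g(V,\lambda) = \sup_{\mu \geq 0} L$ in closed form. Writing $\mu(x,a) = \nu(x)\pi(a|x)$ with $\nu(x)\geq 0$ and $\pi(\cdot|x)\in\Delta(\A)$ decouples the conditional divergence as $\DDC{\mu}{\mu'} = \sum_x \nu(x)\sum_a \pi(a|x)\log\frac{\pi(a|x)}{\pi_{\mu'}(a|x)}$, so the inner maximizations over each $\pi(\cdot|x)$ become standard KL-regularized linear problems. By the Gibbs variational principle the unique maximizer is $\pi^*(a|x) \propto \pi_{\mu'}(a|x)\exp(\eta A(x,a))$---recovering~\eqref{eq:optpol}---with optimal per-state value $W(x;V) := \frac{1}{\eta}\log\sum_a \pi_{\mu'}(a|x)\exp(\eta A(x,a))$. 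Substituting reduces the dual to
\[
 g(V,\lambda) = \lambda + \sup_{\nu \geq 0} \sum_x \nu(x)\bpa{W(x;V) - \lambda},
\]
which is finite (equal to $\lambda$) iff $W(x;V) \leq \lambda$ for every $x$; unpacking $A$ rewrites this inequality as $V(x) \geq \frac{1}{\eta}\log\sum_a \pi_{\mu'}(a|x)\exp\bpa{\eta(r(x,a) - \lambda + \sum_y P(y|x,a)V(y))}$, exactly the right-hand side appearing in Proposition~\ref{prop:conddual}.

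The one subtle point I anticipate, and the main obstacle, is promoting this inequality to the equality stated in the proposition. I would invoke complementary slackness: at any dual optimum $(V^*,\lambda^*)$ the constraint must bind at every state $x$ with $\nu_{\mu^*}(x) > 0$, and under Assumption~\ref{ass:unichain} this covers the whole recurrent class of $\mu^*$. Extending $V^*$ to any remaining transient states via the soft-Bellman fixed-point map defined by the right-hand side yields an equivalent feasible dual point achieving the same value, so the two formulations describe the same dual optimum. Strong duality---guaranteed by convexity of the primal and linearity of the constraints---then identifies $\min \lambda$ with $\trho^*_\eta$, closing the proof.
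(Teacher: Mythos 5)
Your derivation follows the same Lagrangian-duality route as the paper (same multipliers $V$ and $\lambda$, same advantage-function regrouping, same Gibbs-form maximizer recovering \eqref{eq:optpol}), but you organize the endgame differently. The paper never computes $\sup_{\mu\ge 0}\mathcal{L}$ in closed form: it writes the first-order stationarity conditions, eliminates the nonnegativity multipliers by complementary slackness, and then obtains the \emph{equality} constraints directly from the requirement that the stationary-point policy normalize, $\sum_a\pi^*_\eta(a|x)=1$ for every $x$; plugging back into the Lagrangian then gives $g(V,\lambda)=\lambda$. You instead evaluate the dual function exactly via the decomposition $\mu=\nu\pi$, which yields an \emph{inequality}-constrained dual ($V(x)\ge\frac1\eta\log\sum_a\pi_{\mu'}(a|x)\exp\pa{\eta(r(x,a)-\lambda+\sum_yP(y|x,a)V(y))}$, the log-sum-exp analogue of the classical LP dual \eqref{eq:LPprimal_const}), and then you must argue the constraints bind to match the stated form. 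Up to that point your computation is correct, and arguably more careful than the paper's about what the dual function actually is; your complementary-slackness argument correctly forces equality at every state with $\nu_{\mu^*}(x)>0$.

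The one step that does not hold up as written is the patch for states outside the support of $\mu^*$: you propose to ``extend $V^*$ to the remaining transient states via the soft-Bellman fixed-point map'' and claim this yields an equivalent feasible point. But the reference policy $\pi_{\mu'}$ has full support, so the log-sum-exp constraint at a \emph{supported} state $x$ involves $V(y)$ at successor states $y$ that may lie outside the support of $\mu^*$; changing $V$ there perturbs the right-hand sides at supported states, so the modified point need not remain equality-feasible where you already established equality. The clean way to close this is either to note that under the single-recurrent-class setting the paper works in (Assumption~\ref{ass:recurrent}, or the ``all states recurrent under $\mu'$ and hence under $\mu^*_\eta$'' reduction) there are no unsupported states to worry about, or, in general, to invoke the existence of a solution to the full system \eqref{eq:regBOE} at $\lambda=\trho^*_\eta$ (via the non-expansion and fixed-point argument of Appendix~B), which shows the equality-constrained program attains the same optimal value as your inequality-constrained one.
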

We defer the proofs to Appendix~\ref{app:condentropy}. Using strong duality, the optimum of the above problem is $\trho_\eta^*$, which 
implies 
that the optimal dual variables $V_\eta^*$ are given as 
a solution to the system of equations
\begin{equation}\label{eq:regBOE}
 V_\eta^*(x) = \frac 1 \eta \log\sum_a\pi_{\mu'}(a|x)\exp\pa{\eta\pa{r(x,a) - \trho_\eta^* + \sum_y P(y|x,a) V_\eta^*(y)}}, \quad\pa{\forall 
x}.
\end{equation}
By analogy with the Bellman optimality equations~\eqref{eq:Bellman}, we call this the \emph{regularized average-reward Bellman 
optimality equations}. Since $\trho_\eta^*$ is guaranteed to be finite (because it is the maximum of a bounded function on a compact 
domain), the solution to the above optimization problem is well-defined, bounded, and unique up to a constant shift (as in the 
case of the LP dual variables).
Again, we can make the solution unique by imposing the constraint that the expected value should equal $0$.

\section{Dynamic programming in regularized MDPs}\label{sec:dp}

We now present a dynamic-programming view of the regularized optimization problem \citep{Ber07:DPbookVol2} for the choice 
$R=\DDC{\cdot}{\mu'}$. Similar 
derivations have been done several times for \emph{discounted and episodic 
MDPs} \citep{LS96,Rus10,azar2011dynamic,rawlik2012stochastic,AL16,foxUAI16}, but we are not aware of any work that considers the 
average-reward case. That said,
the generalization is straightforward, and the existence and unicity of the optimal solution to the Bellman optimality 
equation~\eqref{eq:regBOE} follows from our results in the previous section.

We first define the \emph{regularized Bellman equations} for an arbitrary policy $\pi$ and a reference policy $\pi'$:
\begin{equation}\label{eq:regBE}
 V_\eta^\pi(x) = \sum_a \pi(a|x) \pa{r(x,a) - \frac 1 \eta \log \frac{\pi(a|x)}{\pi'(a|x)} - \trho_\eta(\pi)+ \sum_y P(y|x,a) 
V_\eta^\pi(y)} 
\quad\pa{\forall x},
\end{equation}
where $\trho_\eta(\pi)$ is the regularized average reward of policy $\pi$ defined as in Equation~\eqref{eq:primal}.
By our Assumption~\ref{ass:unichain} and Proposition~4.2.4~of~\citet{Ber07:DPbookVol2}, it is easy to show that this system of equations 
has a unique solution satisfying the additional constraint $\sum_{x,a} \mu_\pi(x,a) V_\eta^\pi(x) = 0$.
We also define the \emph{Bellman optimality operator} $T_\eta^{*|\pi'}$ and the \emph{Bellman operator} $T_\eta^{\pi|\pi'}$ that correspond 
to the Bellman equations~\eqref{eq:regBOE} and \eqref{eq:regBE}, respectively, as well as the greedy policy operator $G_\eta^{\pi'}$ that 
corresponds to Equation~\eqref{eq:optpol} (for completeness, the formal definitions appear in Appendix~\ref{app:BOP}).

We include two results that are useful for deriving approximate dynamic programming algorithms. We first 
provide a counterpart to the \emph{performance-difference lemma} (\citealp[Prop.~1]{BK97}, 
\citealp[Lemma~6.1]{KakadeLangford2002}, \citealp{Cao07}). This statement will rely on the regularized advantage function $A^\pi_\eta$ 
defined for each policy $\pi$ as
\[
 A^\pi_\eta(x,a) = r(x,a) - \frac 1 \eta \frac{\log \pi(a|x)}{\log \pi'(a|x)} - \trho_\eta(\pi) + \sum_y P(y|x,a) V_\eta^\pi(y) - 
V_\eta^\pi(x),
\]
where $V_\eta^\pi$ is the regularized value function corresponding to $\pi$ with baseline $\pi'$.
\begin{lemma}\label{lem:PD}
 For any pair of policies $\pi,\pi'$, we have
 \[
  \trho(\pi') - \trho(\pi) = \sum_{x,a} \mu_{\pi'}(x,a) A_{\eta}^\pi(x,a).
 \]
\end{lemma}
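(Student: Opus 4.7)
My plan is to expand the right-hand side $\sum_{x,a}\mu_{\pi'}(x,a)A^\pi_\eta(x,a)$ using the definition of the regularized advantage, use the stationarity condition defining $\Delta$ to collapse the value-function contributions, and then identify what remains as $\trho(\pi')-\trho(\pi)$ via the regularized reward formula in~\eqref{eq:primal}. Conceptually this is the regularized analog of the telescoping argument behind the classical performance-difference lemma of Kakade--Langford, with the log ratio in the advantage playing the role of an augmented reward.

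Concretely, substituting the definition of $A^\pi_\eta$ splits the sum into three pieces: (i) a reward-minus-log part $\sum_{x,a}\mu_{\pi'}(x,a)\bigl[r(x,a)-\tfrac{1}{\eta}\log(\pi(a|x)/\pi'(a|x))\bigr]$; (ii) a constant contribution $-\trho_\eta(\pi)$, which factors out of the sum because $\mu_{\pi'}$ is a probability distribution; and (iii) the value-function difference $\sum_{x,a}\mu_{\pi'}(x,a)\bigl[\sum_y P(y|x,a)V^\pi_\eta(y)-V^\pi_\eta(x)\bigr]$. Piece (iii) vanishes because $\mu_{\pi'}\in\Delta$ satisfies $\sum_{x,a}\mu_{\pi'}(x,a)P(y|x,a)=\sum_b\mu_{\pi'}(y,b)=\nu_{\pi'}(y)$ for every $y$ by~\eqref{eq:Delta}, so both halves of the bracket collapse to $\sum_x\nu_{\pi'}(x)V^\pi_\eta(x)$ and cancel; this stationarity-based telescoping is the only point where the MDP dynamics enter the argument. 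For piece (i), I would factor $\mu_{\pi'}(x,a)=\nu_{\pi'}(x)\pi'(a|x)$ and compare it against the definition of $\trho(\pi')$ under the conditional-entropy regularizer $R=\DDC{\cdot}{\mu_{\pi'}}$ used to construct $V^\pi_\eta$; since the Bregman divergence $\DDC{\mu_{\pi'}}{\mu_{\pi'}}$ is zero, piece (i) equals $\trho(\pi')$, and combining the three pieces yields the claim.

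The main obstacle I anticipate is the bookkeeping in piece (i), because $\pi'$ plays a double role as both the policy being compared against $\pi$ and the reference policy entering $V^\pi_\eta$ and $\trho_\eta$; once this convention is pinned down and the log ratio is correctly matched against the Bregman-divergence term in $\trho_\eta(\pi')$, the remaining steps are purely algebraic, and the stationarity cancellation in (iii) is the real heart of the proof.
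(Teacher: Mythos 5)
Your overall route is the same as the paper's proof in Appendix~\ref{app:PD}: expand $A^\pi_\eta$ inside the sum, pull out the constant $-\trho_\eta(\pi)$ because $\mu_{\pi'}$ is a probability distribution, and cancel the two value-function halves using the stationarity constraint $\sum_{x,a}\mu_{\pi'}(x,a)P(y|x,a)=\sum_b\mu_{\pi'}(y,b)$. Your pieces (ii) and (iii) are handled correctly and match the paper's ``last step follows from the stationarity of $\mu'$''.

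The gap is in piece (i). The regularization term inside $A^\pi_\eta$ is $-\frac{1}{\eta}\log\pa{\pi(a|x)/\pi'(a|x)}$, the log-ratio of the \emph{evaluated} policy $\pi$ to the baseline $\pi'$. Averaging it under $\mu_{\pi'}$ therefore produces $+\frac{1}{\eta}\DDC{\mu_{\pi'}}{\mu_{\pi}}$; the divergence you invoke, $\DDC{\mu_{\pi'}}{\mu_{\pi'}}=0$, concerns the ratio $\pi'/\pi'$ and never appears, while the term that does appear is strictly positive whenever $\pi\neq\pi'$ on the recurrent set. Concretely, your piece (i) equals $\sum_{x,a}\mu_{\pi'}(x,a)r(x,a)+\frac{1}{\eta}\DDC{\mu_{\pi'}}{\mu_{\pi}}$, so if (as you argue) $\trho_\eta(\pi')$ is the regularized average reward of $\pi'$ with baseline $\pi'$, your computation actually yields $\sum_{x,a}\mu_{\pi'}(x,a)A^\pi_\eta(x,a)=\trho_\eta(\pi')-\trho_\eta(\pi)+\frac{1}{\eta}\DDC{\mu_{\pi'}}{\mu_{\pi}}$ rather than the displayed identity. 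To be fair, the paper's own proof is terse at exactly this spot---it identifies $\sum_{x,a}\mu_{\pi'}(x,a)\pa{r(x,a)-\frac{1}{\eta}\log\frac{\pi(a|x)}{\pi'(a|x)}}$ with $\trho(\pi')$ in a single step, i.e.\ it reads the regularized return of $\pi'$ with the penalty still written in terms of $\pi$---so you have located the genuinely delicate bookkeeping point you yourself anticipated; but the justification you give (vanishing of $\DDC{\mu_{\pi'}}{\mu_{\pi'}}$) does not close it. You must either state explicitly which convention for $\trho(\pi')$ you adopt and match the log-ratio in the advantage to that convention, or carry the cross-divergence term $\frac{1}{\eta}\DDC{\mu_{\pi'}}{\mu_{\pi}}$ through to the conclusion.
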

For completeness, we provide the simple proof in Appendix~\ref{app:PD}. 

Second, we provide an expression for the gradient of $\trho_\eta$, thus providing a regularized counterpart of the \emph{policy gradient 
theorem} of \citet{SMSM99}. To formalize this statement, let us consider a policy 
$\pi_\theta$ parametrized by a vector $\theta\in\real^d$ and assume that the gradient $\nabla \pi_\theta(a|x)$ exists for all $x,a$ and 
all $\theta$. The form of the policy gradient is given by the following lemma, which we prove in Appendix~\ref{app:relgrad}:
\begin{lemma}\label{lemma:polgrad}
Assume that $\frac{\partial \pi_{\theta}(a|x)}{\partial \theta_i} / \pi_\theta(a|x) > 0$ for all $\theta_i,x,a$. The gradient of 
$\trho_\eta$ exists and satisfies
\[
 \nabla \trho_\eta(\theta) = \sum_{x,a} \mu_{\pi_\theta}(x,a) \nabla \log \pi_\theta(a|x) A_\eta^{\pi_{\theta}} (x,a).
\]
\end{lemma}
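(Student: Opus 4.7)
\medskip
\noindent\textbf{Proof plan.} The plan is to mimic the classical derivation of the policy gradient theorem of \citet{SMSM99}, using the \emph{regularized} Bellman equation~\eqref{eq:regBE} in place of the usual one. The extra entropy-like term contributes derivatives in a way that will collapse neatly thanks to $\sum_a \pi_\theta(a|x) = 1$.

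First I would fix $\theta$, write $\pi = \pi_\theta$, $V = V_\eta^{\pi_\theta}$, $\nu = \nu_{\pi_\theta}$, $\mu = \mu_{\pi_\theta}$, and rearrange~\eqref{eq:regBE} as
\[
 V(x) + \trho_\eta(\pi) = \sum_a \pi(a|x)\pa{r(x,a) - \tfrac 1 \eta \log\tfrac{\pi(a|x)}{\pi'(a|x)} + \sum_y P(y|x,a)V(y)}.
\]
Differentiating both sides with respect to $\theta_i$ and applying the product rule on the right-hand side produces three groups of terms: a sum of $\partial_i \pi(a|x)$ against the bracket, the inner derivative $-\tfrac{1}{\eta}\partial_i\pi(a|x)/\pi(a|x)$ weighted by $\pi(a|x)$, and the recursive term $\sum_a \pi(a|x)\sum_y P(y|x,a)\partial_i V(y)$. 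The second group telescopes to $-\tfrac{1}{\eta}\sum_a \partial_i \pi(a|x) = 0$ because $\sum_a \pi(a|x) = 1$ implies $\sum_a \partial_i \pi(a|x) = 0$. Using the same identity, I can freely subtract $V(x)$ and $\trho_\eta(\pi)$ inside the bracket of the first group, which exactly reconstructs the regularized advantage $A^\pi_\eta(x,a)$.

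Next I would multiply by $\nu(x)$ and sum over $x$. By stationarity of $\nu$ under $\pi$, the recursive term satisfies $\sum_x \nu(x)\sum_a\pi(a|x)\sum_y P(y|x,a)\partial_i V(y) = \sum_y \nu(y)\partial_i V(y)$, so it cancels $\sum_x \nu(x)\partial_i V(x)$ on the left. Since $\sum_x \nu(x) = 1$, what remains on the left is $\partial_i \trho_\eta(\pi_\theta)$, while on the right we have $\sum_{x,a}\mu(x,a)\,\partial_i\log\pi_\theta(a|x)\,A^\pi_\eta(x,a)$ after using $\partial_i\pi_\theta(a|x) = \pi_\theta(a|x)\,\partial_i\log\pi_\theta(a|x)$. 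Doing this for every coordinate $i$ yields the claimed formula.

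To justify the manipulations, I would first argue that $V_\eta^{\pi_\theta}$ and $\trho_\eta(\pi_\theta)$ are differentiable in $\theta$: under the full-support hypothesis (guaranteed by the assumption in the lemma), the system~\eqref{eq:regBE} together with the normalization $\sum_{x,a}\mu_\pi(x,a)V_\eta^\pi(x)=0$ has coefficients that depend smoothly on $\pi$, and an implicit function argument gives smoothness of $V$ and $\trho_\eta$ in $\theta$; the stationary distribution $\nu_{\pi_\theta}$ is likewise smooth under Assumption~\ref{ass:unichain}. The main obstacle---minor but worth spelling out---is this differentiability of the value function and the existence/boundedness of $\log\pi_\theta(a|x)$, which is where the positivity hypothesis on $\partial_i\pi_\theta(a|x)/\pi_\theta(a|x)$ is needed (it rules out degeneracies at which $\pi_\theta(a|x)$ vanishes). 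Once these regularity points are in place, the computation above is entirely mechanical and the rest is algebra.
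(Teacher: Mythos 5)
Your proposal is correct and follows essentially the same route as the paper: differentiate the regularized Bellman equation \eqref{eq:regBE} coordinate-wise, use $\sum_a \partial_i \pi_\theta(a|x)=0$ to absorb the $\tfrac1\eta$ term and the state-dependent shifts so the bracket becomes $A^{\pi_\theta}_\eta(x,a)$, and then sum against the stationary distribution so the $\partial_i V$ terms telescope. The only difference is cosmetic bookkeeping (you cancel the $-\tfrac1\eta\sum_a\partial_i\pi$ term outright where the paper adds $\tfrac1\eta - V(x)$ to the multiplier), plus your added remarks on differentiability, which the paper leaves implicit.
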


\section{Algorithms}\label{sec:algs}
In this section we derive several reinforcement learning algorithms based on our results. For clarity of presentation, we assume that the 
MDP $M$ is fully known, an assumption that we later relax in the experimental evaluation. We will study a generic sequential optimization 
framework where a sequence of policies $\pi_{k}$ are computed iteratively.
Inspired by the online convex optimization literature (see, e.g., \citealp{SS12,Haz16}) and by our convex-optimization formulation, we 
study two families of algorithms: Mirror Descent and Dual Averaging (also known as Follow-the-Regularized-Leader).

\subsection{Iterative policy optimization by Mirror Descent}\label{sec:MD}
A direct application of the Mirror Descent algorithm \citep{NY83,BT03,Martinet1978,R76} to our case is defined as
\begin{equation}\label{eq:MD}
 \mu_{k+1} = \argmax_{\mu\in\Delta} \ev{\rho(\mu) - \frac 1\eta \DD{\mu}{\mu_k}{R}},
\end{equation}
where $D_R$ is the Bregman divergence associated with the convex regularization function $R$.
We now proceed to show how various learning algorithms can be recovered from this formulation.
\subsubsection{Mirror Descent with the relative entropy}
We first remark that the Relative Entropy Policy Search (REPS) algorithm of \citet{peters2010relative} can be formulated as an instance of 
Mirror Descent with the Bregman divergence $D_S$.
This is easily seen by 
comparing the form of the update rule~\eqref{eq:MD} with the problem formulation of \citet[pp.~2]{peters2010relative}, with the slight 
difference that our regularization is additive and theirs is enforced as a constraint. It is easy to see that this only amounts to a change 
in learning rate. This connection is not new: it has been first shown by 
\citet{ZiNe13}\footnote{Although they primarily referred to Mirror Descent as the ``Proximal Point Algorithm'' following 
\citep{R76,Martinet1978}.}, and has been recently rediscovered by \citet{ML16}. 
Independently of each other, \citet{ZiNe13} and \citet{DGS14} both show that Mirror Descent achieves near-optimal regret guarantees in an 
online learning setup where the transition function is known, but the reward function is allowed to change arbitrarily between decision 
rounds. This implies that REPS duly converges to the optimal policy in our setup.

\subsubsection{Mirror Descent with the conditional entropy}\label{sec:MD_CR}
We next show that the Dynamic Policy Programming (DPP) algorithm of
\citet{AGK12} and the Trust-Region Policy Optimization 
(TRPO) algorithm of \citet{schulman2015trust}
are both approximate variants of Mirror Descent with the Bregman divergence $D_C$.
To see this, note that a full Mirror Descent update requires computing the optimal value function $V^*_\eta$ for the baseline $\mu_k$, 
e.g.~by regularized value iteration or regularized policy 
iteration (see Appendix~\ref{app:BOP}).
Since a full update for $V^*_\eta$ is expensive, DPP and TRPO provide two ways to approximate it. 
We remark that the algorithm of \citet{rawlik2012stochastic} can also be viewed as an instance of Mirror Descent for the finite-horizon 
episodic setting, in which the exact update can be computed efficiently by dynamic programming.

\paragraph{Dynamic Policy Programming.} We first claim that each iteration of DPP is a \emph{single regularized value iteration 
step}: Starting from the previous value function $V_k$, it extracts the greedy policy $\pi_{k+1} = 
G_\eta^{\pi_k}[V_k]$ and applies 
the Bellman optimality operator $T_\eta^{*|\pi_k}$ to obtain $V_{k+1} = T_\eta^{*|\pi_k}[V_k]$. This follows from comparing the form of DPP 
presented 
in Appendix~A of \citet{AGK12}: their update rules~(19) and~(20) precisely match the discounted analogue of our 
expressions~\eqref{eq:VI} in Appendix~\ref{app:BOP} with $\pi' = \pi_k$. The
convergence guarantees proved by \citet{AGK12} 
demonstrate the soundness of this approximate update.

\paragraph{Trust-Region Policy Optimization.} Second, we claim that each iteration of TRPO is a \emph{single policy iteration step}: TRPO 
first fully evaluates the policy $\pi_k$ to 
compute its \emph{unregularized} value function $V_k=V_{\infty}^{\pi_k}$ and then extracts the regularized greedy policy $\pi_{k+1} = 
G_\eta^{\pi_k}[V_k]$ with $\pi_k$ as a baseline. This can be seen by inspecting the TRPO update\footnote{As in the case of REPS, we 
discuss here the additive-regularization version of the algorithm. The entropy-constrained update actually implemented by 
\citet{schulman2015trust} only differs in the learning rate.} that takes the form
\[
 \pi_{k+1} = \argmax_{\pi} \ev{\sum_{x} \nu_{\pi_k}(x) \sum_a \pi(a|x) \pa{A_{\infty}^{\pi_k}(x,a) - \frac 1\eta \log 
\frac{\pi(a|x)}{\pi_k(a|x)}}}.
\]
This objective approximates Mirror Descent
by ignoring the effect of changing the policy on 
the state distribution. Surprisingly, using our formalism, this update can be expressed in 
closed form as
\[
 \pi_{k+1}(a|x) \propto \pi_{k}(a|x) e^{\eta A_{\infty}^{\pi_k}(x,a)}.
\]
We present the detailed derivations in Appendix~\ref{app:TRPO}.
A particularly interesting consequence of this result is that TRPO is \emph{completely equivalent} to the MDP-E algorithm of 
\citet{even-dar09OnlineMDP} (see also 
\citep{neu10o-ssp,neu14o-mdp-full}), which is known to minimize regret in an online setting, thus implying that TRPO also converges to the 
optimal policy in the stationary setting. This guarantee is much stronger than the ones provided by \citet{schulman2015trust}, who only 
claim that TRPO produces a monotonically improving sequence of policies (which may still converge to a suboptimal policy).

\subsection{Iterative policy optimization by Dual Averaging}\label{sec:DA}
We next study algorithms arising from the Dual Averaging scheme~\citep{xiao2010dual,mcmahan2014survey}, commonly known as 
Follow-the-Regularized-Leader in online learning~\citep{SS12,Haz16}. This algorithm is defined by the iteration
\begin{equation}\label{eq:DA}
 \mu_{k+1} = \argmax_{\mu\in\Delta} \ev{\rho(\mu) - \frac{1}{\eta_k} R(\mu)},
\end{equation}
where $\eta_k$ is usually an increasing sequence to ensure convergence in the limit. We are unaware of any pure instance of dual 
averaging using relative entropy, and only discuss conditional entropy below.

\subsubsection{Dual Averaging with the conditional entropy}
Just as for Mirror Descent, a full update~\eqref{eq:DA} requires computing the optimal value function $V_\eta^*$.
Various approximations of this update have been long studied in the RL
literature---see, e.g., \citep{LS96} (with additional discussion by
\citep{AL16}), \citep{perkins,Rus10,PS12,foxUAI16}. In this section, we focus
on the state-of-the-art algorithms of \citet{mnih2016asynchronous} and
\citet{o2017pgq} that were originally derived from an optimization formulation
resembling our Equation~\eqref{eq:primal}. Our main insight is that this
algorithm can be adjusted to have a dynamic-programming interpretation and a
convergence guarantee.

\paragraph{Entropy-regularized policy gradients.} The A3C algorithm of \citet{mnih2016asynchronous} aims to maximize
\[
 \rho(\pi) - \frac {1}{\eta_k} \sum_{x} \nu_{\pi_k}(x)  \sum_a \pi(a|x) \log \pi(a|x)
\]
by taking policy gradient steps.
Interestingly, our formalism implies a connection between TRPO and A3C.
Due to Lemma~\ref{lemma:polgrad}, the gradient of $\rho(\pi_\theta)$ with respect to $\theta$ coincides with the gradient of 
$\sum_{x} \nu_{\pi_k}(x) \sum_a \pi_\theta(a|x) A_{\infty}^{\pi_{k}}(x,a)$, so A3C actually attempts 
to optimize the objective
\begin{equation}\label{eq:RPG}
 \sum_{x} \nu_{\pi_k}(x) \sum_a \pi_\theta(a|x) \pa{A_{\infty}^{\pi_k}(x,a) - \frac {1}{\eta_k} \log \pi_\theta(a|x)}.
\end{equation}
This objective can be seen as the dual-averaging counterpart of the TRPO objective.
As in the case of TRPO, the maximizer of this objective can be computed in closed form as
\[
 \pi_{k+1} (a|x) \propto e^{\eta_k A_\infty^{\pi_k}}.
\]
Unlike for TRPO, we are not aware of any convergence results for A3C, and we believe the algorithm does not converge.
Indeed, the objective~\eqref{eq:RPG} is non-convex in either of the natural parameters $\mu$ or $\pi$, which 
can cause premature convergence to a bad local optimum.
An even more serious concern is that the objective function changes between iterations, so gradient descent may fail to converge to 
\emph{any} stationary point. This problem is 
avoided by TRPO since the sum of the TRPO objectives is a sensible optimization objective~\cite[Theorem~4.1]{even-dar09OnlineMDP}. However, 
there is no such clear interpretation for the objective~\eqref{eq:RPG}. 

\citet[Section~3.1]{o2017pgq} study the stationary points of the objective~\eqref{eq:RPG} and, similarly to us, 
show a connection between a certain type of value function and a policy achieving the stationary point. However, they do not show that 
this stationary point is unique nor that gradient descent converges to a stationary point\footnote{Strictly speaking, 
\citet{o2017pgq} do not even show that a stationary point~\eqref{eq:RPG} exists.}. As we argue above, this may very well not be 
the case.  These observations are consistent with those of \citet{AL16}, who show that softmax policy updates may lead to inconsistent 
behavior when used in tandem with unregularized advantage functions.

To overcome these issues, we advocate for directly optimizing the objective~\eqref{eq:DA} instead of~\eqref{eq:RPG} via gradient descent. 
Due to the fact that~\eqref{eq:DA} is convex in $\mu$ and to standard results regarding dual averaging \citep{mcmahan2014survey}, this 
scheme is 
guaranteed to converge to the optimal policy. Estimating the gradients can be done analogously as for the unregularized objective, by our 
Lemma~\ref{lemma:polgrad}.

\section{Experiments}\label{sec:exp}
In this section we analyze empirically several of the algorithms described in the previous section, with the objective of 
illustrating the interplay of regularization and model-estimation error in a simple reinforcement learning setting. We consider an
iterative setup where in each episode $k=1,2,\dots,N$, we execute a policy $\pi_k$, observe the sample transitions and update the 
estimated model via maximum likelihood.  We focus on the regularization aspect,
with no other approximation error than that introduced by model estimation.  It is important
to emphasize that the comparison may not extend to other variants of the
algorithms or in the presence of other sources of approximation.
\begin{figure}[!tbp]
\centering
\includegraphics[width=\textwidth]{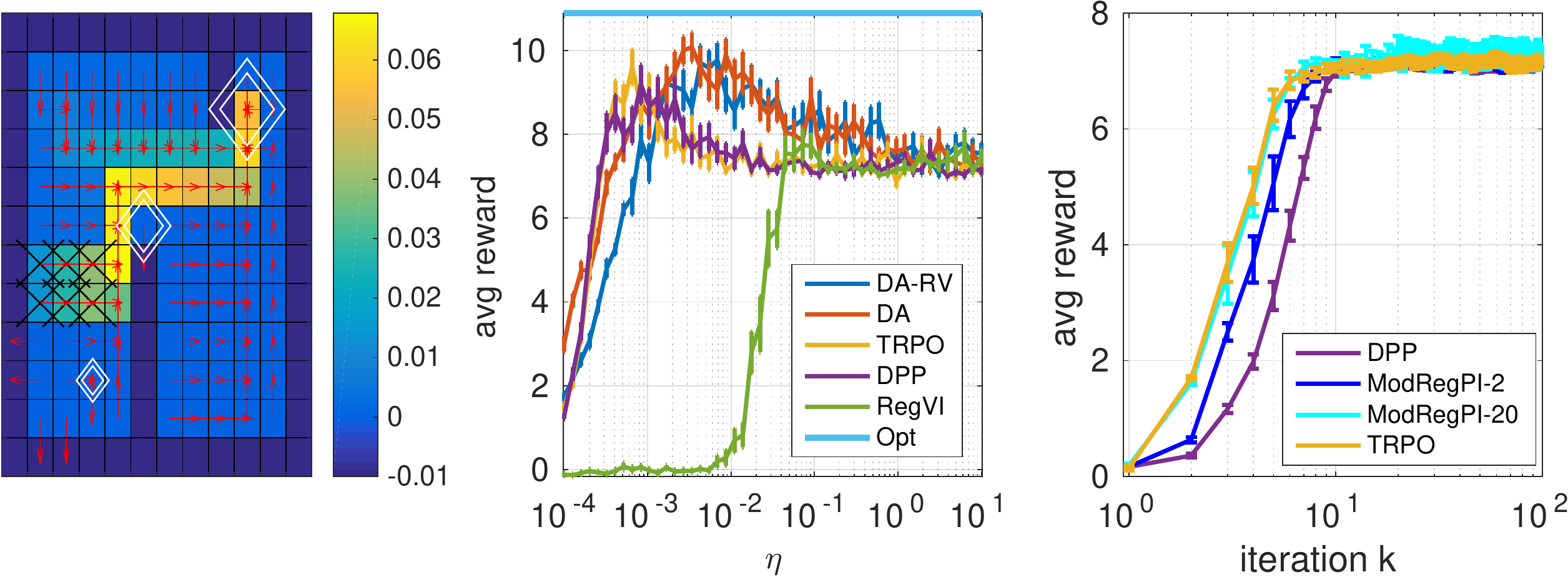}
\caption{
\textbf{{Left}}: the MDP used for evaluation. Reward is $-0.1$ at the walls and $5, 50, 200$ at the diamonds. The optimal policy is 
indicated by red
arrows. The cell colors correspond to the stationary state distribution
for open locations.
\textbf{{Middle}}: 
Average reward as a function of the learning rate $\eta$ for all algorithms
(see text for details).
Number of iterations $N$ and samples per iteration $S$ are $N=S=500$.
Results are taken over $20$ random runs per value of $\eta$.
\textbf{{Right}}: 
Performance of DPP, TRPO and two version of modified regularized Policy Iteration
for a fixed $\eta\approx 0.1$.
}
\label{fig:MDP}
\end{figure}

We consider a simple MDP, defined on a grid (Fig.~\ref{fig:MDP}, left), where an agent has four possible actions (up,
down, left and right) that succeed with probability $0.9$ but fail with
probability $0.1$. In case of failure, the agent does not move or goes to any
random adjacent location.
Negative (or positive) rewards are given after hitting a wall (or reaching one of the
white diamond locations, respectively). In both cases, the agent is sent back to one of the starting locations
(marked with 'X' in the figure).

The reward of the diamonds is proportional to the distance
from the starting locations. Therefore, the challenge of this experiment is to discover the path towards the top-right reward 
while learning the dynamics incrementally, and then exploit it.
Note that the optimal agent ignores the intermediate reward at the center and even prefers to
hit a wall in locations too far away from the largest reward (bottom-left).

We fix the number of iterations $N$ and samples per iteration $S$ and 
analyze the average reward of the final policy as a function of $\eta$.  
We compare the following algorithms: regularized Value Iteration with a fixed
reference uniform policy and fixed $\eta$ (RegVI); several variants of
approximate Mirror Descent, including DPP and TRPO (Section~\ref{sec:MD}); and
two Dual Averaging methods (DA and DA-RV). DA corresponds to optimizing
the objective~\eqref{eq:RPG}, which is not guaranteed to lead to an optimal policy, and DA-RV
corresponds to the iteration~\eqref{eq:DA}, which has convergence guarantees (Section~\ref{sec:DA}).
For both variants, we use a linear annealing schedule $\eta_k = \eta\cdot
k$.

Fig.~\ref{fig:MDP} (middle) shows results as a function of $\eta$.  The maximum
reward is depicted in blue at the top.  For very small $\eta$ (strong
regularization), all algorithms perform poorly and do not even reach the
intermediate reward.  In contrast, for very large $\eta$, they converge
prematurely to the greedy policy that exploits the intermediate reward.
Typically, for an intermediate value of $\eta$, the algorithms occasionally
discover the optimal path and exploit it.  Note that this is not the case for
RegVI, which never obtains the optimal policy. This shows that using both a
fixed value of $\eta$ and a fixed reference policy is a bad choice.  In this
MDP, we observe that the performance of both Dual Averaging methods (DA and
DA-RV) is very similar, and in general slightly better than the approximate Mirror Descent variants.

We also show an interesting relationship between the Mirror Descent approximations. Our
analysis in Section~\ref{sec:MD_CR} suggests an entire array of algorithms
lying between DPP and TRPO, just as Modified Policy Iteration
lies between Value Iteration and Policy Iteration~\citep{PS78,SGGG12}.
Fig.~\ref{fig:MDP} (right) illustrates this idea, showing  the convergence
of DPP and TRPO for a fixed value of $\eta$.  TRPO tends to converge faster
than DPP to a locally optimal policy, since DPP uses a single value
update per iteration.
Using more value updates leads to a modified regularized Policy
Iteration algorithm (we call it ModRegPI-2 and ModRegPI-20, for 2 and 20 updates,
respectively) that interpolates between DPP and TRPO.

\section{Conclusion}\label{sec:conc}
We have presented a unifying view of entropy-regularized MDPs from a convex-optimization perspective. We believe that such unifying 
theories 
can be very useful in moving a field forward: We recall that in the field of online learning theory, the convex-optimization view has 
enabled a unified treatment of many existing algorithms and acts today as the primary framework for deriving new algorithms (see the 
progress from \citet{CBLu06:book} through \citet{SS12} to \citet{Haz16}). 
In this paper, we argued that the convex-optimization view may also be very useful in analyzing algorithms for reinforcement learning: In 
particular, we demonstrated how this framework can be used to provide theoretical justification for state-of-the-art reinforcement 
learning algorithms, and how it can highlight potential problems with them. 
We expect that this newly-found connection will also open the door for constructing more advanced reinforcement 
learning algorithms by borrowing further ideas from the convex optimization literature, such as Composite Objective Mirror 
Descent~\cite{duchi2010composite} and Regularized Dual Averaging~\citep{xiao2010dual}.

Finally, we point out that our work \emph{does not} provide a statistical justification for using entropy regularization in reinforcement 
learning. In the case of online learning in known Markov decision processes with changing reward functions, entropy-regularization has 
been known to yield near-optimal learning algorithms \citep{even-dar09OnlineMDP,neu10o-ssp,neu14o-mdp-full,ZiNe13,DGS14}. It remains to 
be seen if this technique also provably helps in driving the exploration process in unknown Markov decision processes.

\bibliographystyle{abbrvnat}
\bibliography{ftrl,ngbib,allbib,shortconfs}

\appendix
\section{Complementary Technical Results}

\subsection{Convexity of the negative conditional entropy}
\label{app:convexity}

Let us consider the joint state-action distribution $\mu$ on the finite set $\X\times\A$. We denote $\nu_\mu(x) = \sum_a \mu(x,a)$ and 
$\pi_\mu(a|x) = \mu(x,a)/\nu_\mu(x)$ for all $x,a$. We study the negative conditional entropy of $(X,A)\sim\mu$ as a function of $\mu$:
\[
\begin{split}
R_C(\mu) 
 &= \sum_{x,a} \mu(x,a) \log\frac{\mu(x,a)}{\sum_b \mu(x,b)}
= \sum_{x,a} \mu(x,a) \log\frac{\mu(x,a)}{\nu_\mu(x)} .
\end{split}
\]
We will study the Bregman divergence $D_{R_C}$ corresponding to $R_C$:
\[
 \DD{\mu}{\mu'}{R_C} = R_C(\mu) - R_C(\mu') - \nabla R_C(\mu') \transpose (\mu - \mu'),
\]
where the inner product between two vectors $v,w\in\real^{\X\times\A}$ is $w\transpose v = \sum_{x,a} v(x,a) w(x,a)$.
Our aim is to show that $D_{R_C}$ is nonnegative, which will imply the convexity of $R_C$.

We begin by computing the partial derivative of $R_C(\mu)$ with respect to $\mu(x,a)$:
\[
 \frac{\partial R_C(\mu)}{\partial \mu(x,a)} = \log\frac{\mu(x,a)}{\nu_\mu(x)} + 1 - \sum_b\frac{\mu(x,b)}{\nu_\mu(x)} = 
\log\frac{\mu(x,a)}{\nu_\mu(x)},
\]
where we used the fact that $\partial \nu_\mu(x)/\partial \mu(x,a) = 1$ for all $a$.
With this expression, we have
\[
\begin{split}
 R_C(\mu') + \nabla R_C(\mu') \transpose (\mu - \mu')
 &= \sum_{x,a} \mu'(x,a) \log\frac{\mu'(x,a)}{\nu_{\mu'}(x)} + \sum_{x,a} \pa{\mu(x,a) - \mu'(x,a)} \log\frac{\mu'(x,a)}{\nu_{\mu'}(x)}
 \\
 &= \sum_{x,a} \mu(x,a) \log\frac{\mu'(x,a)}{\nu_{\mu'}(x)}.
\end{split}
\]
Thus, the Bregman divergence takes the form
\[
\begin{split}
 \DD{\mu}{\mu'}{R_C} &= \sum_{x,a} \mu(x,a) \pa{\log\frac{\mu(x,a)}{\nu_\mu(x)} - \log\frac{\mu'(x,a)}{\nu_{\mu'}(x)}}
 \\
 &= \sum_{x,a} \mu(x,a) \log\frac{\pi_\mu(a|x)}{\pi_{\mu'}(a|x)}
 = \sum_{x} \nu(x) \sum_a \pi_\mu(a|x) \log\frac{\pi_\mu(a|x)}{\pi_{\mu'}(a|x)}.
\end{split}
\]
This proves that the Bregman divergence corresponding to $R_C$ coincides with $D_C$, as claimed. To conclude the proof, note that $D_C$ is 
the average relative entropy between the distributions $\pi_\mu$ and $\pi_{\mu'}$---that is, a sum a positive terms. Indeed, this shows that 
$D_C$ is 
nonnegative on the set of state-action distributions $\Delta(\X\times\A)$, proving that $R_C(\mu)$ is convex. \jmlrQED

\subsection{Derivation of optimal policies} \label{app:duals}
Here we prove the results stated in Equations~\eqref{eq:optsad}-\eqref{eq:optpol} and Proposition~\ref{prop:conddual}, which give the 
expressions for the dual optimization problems and the optimal solutions corresponding to the primal optimization 
problem~\eqref{eq:primal}, 
for the two choices of regularization function $\DDS{\cdot}{\mu'}$ and $\DDC{\cdot}{\mu'}$. We start with generic derivations that will be 
helpful for analyzing both cases and then turn to studying the individual regularizers.

Recall that the primal optimization objective in \eqref{eq:primal} is given by
\begin{align*}
\max_{\sa\in\Delta} \;\; \trho_\eta(\mu) = \max_{\sa\in\Delta} \ev{\sum_{x,a}\sa(x,a)r(x,a) - \frac 1 \eta R(\sa)},
\end{align*}
where $\Delta$, the feasible set of stationary distributions, is defined by the following constraints:
\begin{align}
\sum_b\sa(y,b) &= \sum_{x,a}\sa(x,a)P(y|x,a), \;\; \forall y\in\Xw,\label{eq:balance}\\
\sum_{x,a}\sa(x,a) &= 1,\label{eq:distr}\\
\sa(x,a) &\geq 0, \;\; \forall (x,a)\in\Xw\times\Aw.\label{eq:lzero}
\end{align}
We begin by noting that for all state-action pairs where $\mu'(x,a) = 0$, the optimal solution $\mu^*_\eta(x,a)$ will also be zero, thanks 
to the form of our regularized objective. Thus, without loss of generality, we will assume that all states are recurrent under 
$\mu'$: $\mu'(x,a)>0$ holds for all state-action pairs.

For any choice of regularizer $R$, the Lagrangian of the primal~\eqref{eq:primal} is given by
\begin{align*}
\mathcal{L}(\sa;V,&\lambda,\varphi) = \sum_{x,a}\sa(x,a)r(x,a) - \frac 1 \eta R(\sa) + \sum_y V(y)\pa{\sum_{x,a}\sa(x,a)P(y|x,a) - 
\sum_b\sa(y,b)}\\
 & \hspace*{.75cm} + \lambda\pa{1 - \sum_{x,a}\sa(x,a)}+\sum_{x,a}\varphi(x,a)\sa(x,a)\\
 &= \sum_{x,a}\sa(x,a)\pa{r(x,a) + \sum_yP(y|x,a)V(y) - V(x)-\lambda+\varphi(x,a)} - \frac 1 \eta R(\sa) + \lambda\\
 &= \sum_{x,a}\sa(x,a)\pa{A(x,a)-\lambda+\varphi(x,a)} - \frac 1 \eta R(\sa) + \lambda,
\end{align*}
where $V$, $\lambda$ and $\varphi$ are the Lagrange multipliers\footnote{Technically, these are KKT multipliers as we also have inequality 
constraints. However, these will be eliminated by means of complementary slackness in the next sections.}, and $A$ is the advantage 
function for $V$. Setting the gradient of the 
Lagrangian with respect to $\mu$ to $0$ yields the system of equations
\begin{align}
0 = \frac{\partial \mathcal{L}}{\partial\sa(x,a)}&= \pa{A(x,a) - \lambda + \varphi(x,a)} - \frac 1 \eta 
\frac{\partial R(\sa)}{\partial\sa(x,a)},\nonumber\\
\Leftrightarrow \;\; \frac{\partial R(\sa)}{\partial\sa(x,a)}&= \eta\pa{A(x,a) - \lambda + 
\varphi(x,a)},\label{eq:gradient}
\end{align}
for all $x,a$. By the first-order stationary condition, the unique optimal solution $\sa_\eta^*$ satisfies this system of equations.
To obtain the final solution we have to compute the optimal values $V_\eta^*$, $\lambda_\eta^*$ and $\varphi_\eta^*$ of the Lagrange 
multipliers by optimizing the dual optimization objective $g(V,\lambda,\varphi)=\mathcal{L}(\sa_\eta^*;V,\lambda,\varphi)$, and insert into 
the expression for $\sa_\eta^*$. $V$ and $\lambda$ are unconstrained in the dual, while $\varphi$ satisfies $\varphi(x,a)\geq 0$ for each 
$(x,a)\in\X\times\A$. We give the derivations for each regularizer below.

\subsection{The relative entropy}\label{app:shannon}
Here we prove the results for $R(\sa)=\DDS{\sa}{\sa'} = \sum_{x,a} \mu(x,a) \log \frac{\mu(x,a)}{\mu'(x,a)}$. The gradient of $R$ is
\begin{align*}
\frac{\partial R(\sa)}{\partial\sa(x,a)} &= \log\frac{\sa(x,a)}{\sa'(x,a)} + 1.
\end{align*}
The optimal state-action distribution $\sa_\eta^*$ is now directly given by Equation~\eqref{eq:gradient}:
\begin{align}
\sa_\eta^*(x,a) &= \sa'(x,a)\exp\pa{\eta \pa{A(x,a) - \lambda + \varphi(x,a)}-1}.\label{eq:relrelopt}
\end{align}
For $\sa_\eta^*$ to belong to $\Delta$, it has to satisfy Constraints~\eqref{eq:distr} and \eqref{eq:lzero}. Because of the exponent 
in~\eqref{eq:relrelopt}, $\sa_\eta^*(x,a)\geq 0$ trivially holds for any choice of $\varphi(x,a)$, and complementary slackness implies 
$\varphi_\eta^*(x,a)=0$ for each $(x,a)$. Eliminating $\varphi$ and inserting $\sa_\eta^*$ into Constraint~\eqref{eq:distr} gives us
\begin{align}
1 &= \sum_{x,a}\sa'(x,a)\exp\pa{\eta A(x,a)}e^{-\eta\lambda-1},\nonumber\\
\Leftrightarrow \;\; \lambda &= \frac 1 \eta \pa{ \log \sum_{x,a}\sa'(x,a)\exp\pa{\eta A(x,a)} - 1}.\label{eq:rellambda}
\end{align}
Since the value of $\lambda$ is uniquely determined by \eqref{eq:rellambda}, we can optimize the dual over $V$ only. The dual function is 
given by
\begin{align*}
g(V)=\mathcal{L}(\sa_\eta^*;V,\lambda) &= \sum_{x,a}\sa_\eta^*(x,a)\pa{A(x,a)-\lambda-\frac 1 \eta \log \frac{\mu_\eta^*(x,a)}{\mu'(x,a)}} 
+ \lambda = \frac 1 \eta + \lambda\nonumber\\
 &=\frac 1 \eta \log \sum_{x,a}\sa'(x,a)\exp\pa{\eta A(x,a)}.
\end{align*}
This is precisely the dual given in Equation~\eqref{eq:reldual}. Note that this dual function has no associated constraints. The expression 
for the optimal state-action distribution in Equation~\eqref{eq:optsad} is obtained by inserting the advantage function $A_\eta^*$ 
corresponding to the optimal value function $V_\eta^*$ into~\eqref{eq:relrelopt}.

\subsection{The conditional entropy}\label{app:condentropy}
We next prove the results for $R(\mu)=\DDC{\mu}{\mu'} = \sum_{x,a} \mu(x,a) \log \frac{\pi_\mu(a|x)}{\pi_{\mu'}(a|x)}$. The gradient of $R$ 
is
\begin{align*}
\frac{\partial R(\sa)}{\partial\sa(x,a)} &= \log\frac{\pi_\sa(a|x)}{\pi_{\mu'}(a|x)} + 
\sum_b\frac{\sa(x,b)}{\pi_\sa(b|x)}\cdot\frac{\partial\pi_\sa(b|x)}{\partial\sa(x,a)}.
\end{align*}
Since the policy is defined as $\pi_\mu(a|x)=\frac {\mu(x,a)} {\sd_\mu(x)}$, its gradient with respect to $\mu$ is
\begin{align*}
\frac{\partial\pi_\sa(b|x)}{\partial\sa(x,a)} = \frac {\II{a=b}} {\sd_\mu(x)} - \frac {\mu(x,b)} {\sd_\mu(x)^2} = \frac 1 {\sd_\mu(x)} 
\pa{\II{a=b}-\pi_\sa(b|x)}, \;\; \forall x\in\X, a,b\in\A.
\end{align*}
Inserting into the expression for the gradient of $R$ yields
\begin{align*}
\frac{\partial R(\sa)}{\partial\sa(x,a)} &= \log\frac{\pi_\sa(a|x)}{\pi_{\mu'}(a|x)} + \sum_b\frac{\pi_\sa(b|x)}{\pi_\sa(b|x)} 
\pa{\II{a=b}-\pi_\sa(b|x)}\\
 &= \log\frac{\pi_\sa(a|x)}{\pi_{\mu'}(a|x)} + 1 - \sum_b\pi_\sa(b|x) = \log\frac{\pi_\sa(a|x)}{\pi_{\mu'}(a|x)}.
\end{align*}
The optimal policy $\pi_\sa^*$ is now directly given by Equation~\eqref{eq:gradient}:
\begin{align}\label{eq:condopt}
\pi_\eta^*(a|x) = \pi_{\mu'}(a|x)\exp\pa{\eta\pa{A(x,a)-\lambda + \varphi(x,a)}}.
\end{align}
For $\sa_\eta^*$ to belong to $\Delta$, it has to satisfy Constraint~\eqref{eq:lzero}. Because of the exponent in~\eqref{eq:condopt} and 
the fact that $\sa_\eta^*(x,a)\propto\pi_\eta^*(a|x)$, $\sa_\eta^*(x,a)\geq 0$ trivially holds for any choice of $\varphi(x,a)$, implying 
that $\varphi_\eta^*(x,a)=0$ for each $(x,a)$ by complementary slackness. Since $\pi_\eta^*(a|x)=\frac {\sa_\eta^*(x,a)} {\sd_\eta^*(x)}$, 
we also obtain the following set of constraints:
\begin{align*}
\sum_a\pi_\eta^*(a|x)=\sum_a\frac {\sa_\eta^*(x,a)} {\sd_\eta^*(x)} = \frac {\sd_\eta^*(x)} {\sd_\eta^*(x)} = 1, \;\; \forall x\in\X.
\end{align*}
Inserting the expression for $\pi_\eta^*$ yields
\begin{align*}
1 &= \sum_a\pi_{\mu'}(a|x)\exp\pa{\eta A(x,a)}e^{-\eta\lambda}, \;\; \forall x\in\X.
\end{align*}
If we expand the expression for $A(x,a)$ and rearrange the terms we obtain
\begin{align}
V(x) &= \frac 1 \eta \log\sum_a\pi_{\mu'}(a|x)\exp\pa{\eta\pa{r(x,a) - \lambda + \sum_yP(y|x,a)V(y)}}, \;\; \forall 
x\in\Xw.\label{eq:condlambda}
\end{align}
The dual function is obtained by inserting the expression for $\mu^*$ into the Lagrangian:
\begin{align}
g(V,\lambda)=\mathcal{L}(\mu_\eta^*;V,\lambda)=\sum_{x,a} \mu_\eta^*(a|x) \pa{ A(x,a) - \lambda - \frac 1 \eta \log 
\frac{\pi_\eta^*(a|x)}{\pi_{\mu'}(a|x)} } + \lambda = \lambda.\label{eq:conddual}
\end{align}
Together, Equations~\eqref{eq:condlambda} and \eqref{eq:conddual} define the dual optimization problem in Proposition~\ref{prop:conddual}. 
The expression for the optimal policy in Equation~\eqref{eq:optpol} is obtained by inserting the optimal advantage function $A_\eta^*$ 
into~\eqref{eq:condopt}.

We remark that to recover the optimal stationary state-action distribution $\mu_\eta^*$, we would have to insert the expression for the 
optimal policy $\pi_\eta^*$ into Constraints~\eqref{eq:balance} and~\eqref{eq:distr}, and solve for the stationary state distribution 
$\nu_\eta^*$. However, this is not necessary since $\mu_\eta^*$ and $\nu_\eta^*$ are not required to solve the dual function or to compute 
the optimal policy.

\section{The regularized Bellman operators}\label{app:BOP}
In this section, we define the \emph{regularized Bellman operator} $T_{\pi|\pi'}$ corresponding to the policy $\pi$ and regularized with 
respect to baseline 
$\pi'$ as
\[
 T_\eta^{\pi|\pi'}[V](x) = \sum_{a} \pi(a|x) \pa{r(x,a) - \log\frac{\pi(a|x)}{\pi'(a|x)} + \sum_{x'} P(x'|x,a) V(x')} \qquad\pa{\forall x}.
\]
Similarly, we define the \emph{regularized Bellman optimality operator} $T_{*|\pi'}$ with respect to baseline $\pi'$ as
\begin{align*}
 T_\eta^{*|\pi'}[V](x) &=  \frac 1 \eta \log\sum_a\pi'(a|x)\exp\pa{\eta\pa{r(x,a) + \sum_y P(y|x,a) V(y)}} \qquad\pa{\forall x},
\end{align*}
and the \emph{regularized greedy policy} with respect to the baseline $\pi'$ as
\begin{align*}
 G_\eta^{\pi'}[V](a|x) &\propto \pi'(a|x) \exp\pa{\eta \pa{r(x,a) + \sum_y P(y|x,a) V(y) - V(x)}}.
\end{align*}
With these notations, we can define the \emph{regularized relative value iteration} algorithm with respect to $\pi'$ by the iteration
\begin{eqnarray}
 \pi_{k+1} = G_\eta^{\pi'}[V_k] \qquad V_{k+1}(x) = T_\eta^{*|\pi'}[V_k](x) -\delta_{k+1} \label{eq:VI}
\end{eqnarray}
for some $\delta_{k+1}$ lying between the minimal and maximal values of $T_\eta^{*|\pi'}[V_k]$. A common technique is to fix a reference 
state $x'$ and choose $\delta_{k+1}=T_\eta^{*|\pi'}[V_k](x')$.

Similarly, we can define the \emph{regularized policy iteration} algorithm by the iteration
\begin{eqnarray}
 \pi_{k+1} = G_\eta^{\pi'}[V_k] \qquad V_{k+1}(x) = \pa{T_\eta^{\pi_{k+1}|\pi'}}^{\infty}[V_k](x) - \delta_{k+1} \label{eq:PI},
\end{eqnarray}
with $\delta_{k+1}$ defined analogously.

For establishing the convergence of the above procedures, it is crucial to ensure that the operator $T_\eta^{*|\pi'}$ is a 
\emph{non-expansion}: 
For any value functions $V_1$ and $V_2$, we need to ensure
\[
 \norm{T_\eta^{*|\pi'}[V_1] - T_\eta^{*|\pi'}[V_2]} \le \norm{V_1 - V_2}
\]
for some norm. We state the following result claiming that the above requirement indeed holds and present the simple proof below. 
We note that analogous results have been proven several times in the literature, see, e.g., \citep{foxUAI16,AL16}.
\begin{proposition}
 $T_\eta^{*|\pi'}$ is a non-expansion for the supremum norm $\infnorm{f} = \max_x \left|f(x)\right|$.
\end{proposition}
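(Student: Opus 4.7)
The plan is to exploit the monotonicity and shift-equivariance of the log-sum-exp (soft-max) operator, which is the standard route for showing non-expansion of Boltzmann/soft Bellman operators.

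First, I would set $c = \infnorm{V_1 - V_2}$, so that $V_2(y) \le V_1(y) + c$ for all $y$ and, by symmetry, $V_1(y) \le V_2(y) + c$. Defining $Q_i(x,a) = r(x,a) + \sum_y P(y|x,a) V_i(y)$ for $i=1,2$, linearity and the fact that $P(\cdot|x,a)$ is a probability distribution immediately give $|Q_1(x,a) - Q_2(x,a)| \le c$ for every state-action pair. This reduces the problem to showing that the map $Q \mapsto \frac{1}{\eta}\log\sum_a \pi'(a|x) e^{\eta Q(x,a)}$ is a non-expansion in the supremum norm for each fixed $x$.

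For this, I would use shift-equivariance: from $Q_2(x,a) \le Q_1(x,a) + c$ and monotonicity of the exponential,
\[
\sum_a \pi'(a|x) e^{\eta Q_2(x,a)} \le e^{\eta c} \sum_a \pi'(a|x) e^{\eta Q_1(x,a)},
\]
and taking $\frac{1}{\eta}\log$ on both sides yields $T_\eta^{*|\pi'}[V_2](x) \le T_\eta^{*|\pi'}[V_1](x) + c$. The reverse inequality follows by swapping the roles of $V_1$ and $V_2$. Taking the maximum over $x$ then gives $\infnorm{T_\eta^{*|\pi'}[V_1] - T_\eta^{*|\pi'}[V_2]} \le c$, which is the claim.

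There is no real obstacle here: the result rests on two elementary facts, namely that $P(\cdot|x,a)$ and $\pi'(\cdot|x)$ are probability distributions (so convex combinations preserve the $c$-gap), and that $\frac{1}{\eta}\log\sum e^{\eta(\cdot)}$ is monotone and commutes with adding a constant. The only point worth spelling out carefully in the write-up is the argument that pulling out $e^{\eta c}$ from the sum and taking $\frac{1}{\eta}\log$ produces an additive $c$, since this is the step that transfers the pointwise bound on $Q_1 - Q_2$ to a pointwise bound on $T_\eta^{*|\pi'}[V_1] - T_\eta^{*|\pi'}[V_2]$. The same argument also shows that $T_\eta^{\pi|\pi'}$ is a non-expansion for any fixed $\pi$, which is what ultimately justifies the convergence of the iterations \eqref{eq:VI} and \eqref{eq:PI} defined earlier.
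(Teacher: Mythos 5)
Your proof is correct, and it reaches the claim by a slightly different (though closely related) route than the paper. You first reduce to the action-value functions $Q_i(x,a)=r(x,a)+\sum_y P(y|x,a)V_i(y)$ exactly as the paper does, but then you invoke the two structural properties of the log-sum-exp map $Q\mapsto \frac1\eta\log\sum_a\pi'(a|x)e^{\eta Q(x,a)}$ — monotonicity and equivariance under adding a constant — to transfer the bound $Q_2\le Q_1+c$ with $c=\infnorm{V_1-V_2}$ directly into $T_\eta^{*|\pi'}[V_2](x)\le T_\eta^{*|\pi'}[V_1](x)+c$. The paper instead computes the difference of the two log-partition functions, rewrites the numerator with $Q_1=Q_2+\Delta$, recognizes the result as $\frac1\eta\log\sum_a p(x,a)e^{\eta\Delta(x,a)}$ for a Gibbs-reweighted distribution $p$, and bounds this by $\max_a\Delta(x,a)$ before relaxing to the sup norm. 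The two arguments use the same elementary ingredients (that $P(\cdot|x,a)$ and $\pi'(\cdot|x)$ are probability distributions, plus soft-max calculus); yours is the standard ``monotone and additively homogeneous maps are sup-norm non-expansive'' argument, which is marginally cleaner, avoids introducing the auxiliary distribution $p$, and, as you note, applies verbatim to $T_\eta^{\pi|\pi'}$ for any fixed $\pi$; the paper's computation yields the slightly sharper intermediate pointwise bound $\max_a\sum_y P(y|x,a)\pa{V_1(y)-V_2(y)}$, though this extra precision is not needed for the proposition. Either way the symmetric inequality and the maximum over $x$ complete the proof, so your argument is a valid substitute.
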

\begin{proof}
 For simplicity, let us introduce the notation $Q_1(x,a) = r(x,a) + \sum_y P(y|x,a) V_1(y)$, with $Q_2$ defined analogously, and $\Delta 
= Q_1 - Q_2$. We have
 \begin{align*}
  T_\eta^{*|\pi'}[V_1](x) - T_\eta^{*|\pi'}[V_2](x) &=
  \frac 1 \eta \pa{\log\sum_a\pi'(a|x)\exp\pa{\eta Q_1(x,y)} - \log\sum_a\pi'(a|x)\exp\pa{\eta Q_2(x,y)}}
  \\
  &=\frac 1 \eta \pa{\log\frac{\sum_a\pi'(a|x)\exp\pa{\eta Q_1(x,y)}}{\sum_a\pi'(a|x)\exp\pa{\eta Q_2(x,y)}}}
  \\
   &=\frac 1 \eta \pa{\log\frac{\sum_a\pi'(a|x)\exp\pa{\eta \pa{Q_2(x,y) + \Delta(x,y)}}}{\sum_a\pi'(a|x)\exp\pa{\eta Q_2(x,y)}}} 
  \\
   &=\frac 1 \eta \log \sum_a p(x,a) \exp\pa{\eta \Delta(x,a)} \qquad\mbox{(with an appropriately defined $p$)}
   \\
   &\le \frac 1 \eta \log \max_a \exp\pa{\eta \Delta(x,a)}
   \\
   &= \max_a \Delta(x,a) = \max_a \sum_y P(y|x,a) \pa{V_1(y) - V_2(y)} 
   \\
   &\le \max_y \left|V_1(y) - V_2(y)\right|.
 \end{align*}
With an analogous technique, we can also show the complementary inequality
\[
 T_\eta^{*|\pi'}[V_2](x) - T_\eta^{*|\pi'}[V_1](x) \le \max_y \left|V_2(y) - V_1(y)\right|,
\]
which concludes the proof.
\end{proof}
Together with the easily-seen fact that $T_\eta^{*|\pi'}$ is continuous, this result immediately implies that $T_\eta^{*|\pi'}$ has a fixed 
point by Brouwer's fixed-point theorem. Furthermore, this insight allows us to treat the value iteration 
method~\eqref{eq:VI} as an instance of \emph{generalized value iteration}, as defined by \citet{LS96}.

We now argue that regularized value iteration converges to the fixed point of $T_\eta^{*|\pi'}$. If the initial value function $V_0$ is 
bounded, then so is $V_k$ for each $k$ since the operator $T_\eta^{*|\pi'}$ is a non-expansion. Similar to Section~\ref{sec:regmdp}, we 
assume without loss of generality that the initial reference policy $\pi_0$ has full support, i.e.~$\pi_0(a|x)>0$ for each recurrent state 
$x$ and each action $a$. Inspecting the greedy policy operator $G_\eta^{\pi'}$, it is easy to show by induction that $\pi_k$ has full 
support for each $k$. In particular, $\pi_{k+1}(a|x)$ only equals $0$ if either $\pi_k(a|x)$ equals $0$ or if the exponent $A_k(x,a)$ 
equals 
$-\infty$, which is only possible if $V_k$ is unbounded.

 Now, since $\pi_k$ has full support for each $k$, any trajectory always has a small probability of reaching a given recurrent state. We 
can 
now use a similar argument as \citet[Prop.~4.3.2]{Ber07:DPbookVol2} to show that regularized value iteration converges to the fixed point 
for $T_\eta^{*|\pi'}$.

\subsection{The proof of Lemma~\ref{lem:PD}}\label{app:PD}
 Let $\mu$ and $\mu'$ be the respective stationary distributions of $\pi$ and $\pi'$. The statement follows easily from using the 
definition of $A_{\eta}^\pi$:
 \begin{align*}
  \sum_{x,a} \mu'(x,a) A_{\eta}^\pi(x,a) &= \sum_{x,a} \mu'(x,a) \pa{r(x,a) - \frac 1 \eta \frac{\log \pi(a|x)}{\log \pi'(a|x)} - \trho(\mu) 
+ 
\sum_y P(y|x,a) V_{\eta}^\pi(y) - V_{\eta}^\pi(x)}
  \\
  &= \trho(\mu') - \trho(\mu) + \sum_{x,a} \mu'(x,a) \pa{\sum_y P(y|x,a) V_{\eta}^\pi(y) - V_{\eta}^\pi(x)}
  \\
  &= \trho(\mu') - \trho(\mu),
 \end{align*}
 where the last step follows from the stationarity of $\mu'$. \jmlrQED

\subsection{Regularized policy gradient}\label{app:relgrad}

Here we prove Lemma~\ref{lemma:polgrad} which gives the gradient of the regularized average reward $\trho_\eta(\theta)$ when the policy 
$\pi_\theta$ is parameterized on $\theta$.
Following \citet{SMSM99}, we first compute the gradient of $V_\eta^{\pi_\theta}$:
\begin{align*}
\frac {\partial V_\eta^{\pi_\theta}(x)} {\partial\theta_i} &= \sum_a \frac {\partial\pi_\theta(a|x)} {\partial\theta_i} \pa{r(x,a) - \frac 1 
\eta \log \frac{\pi_\theta(a|x)}{\pi'(a|x)} - \trho_\eta(\theta)+ \sum_y P(y|x,a) V_\eta^{\pi_\theta}(y)}\\
 &+ \sum_a\pi_\theta(a|x)\pa{- \frac 1 {\eta\pi_\theta(a|x)}\frac {\partial\pi_\theta(a|x)} {\partial\theta_i} - \frac {\partial\trho_\eta} 
{\partial\theta_i} + \sum_y P(y|x,a) \frac {\partial V_\eta^{\pi_\theta}(y)} {\partial\theta_i} }.
\end{align*}
Rearranging the terms gives us
\begin{align*}
\frac {\partial\trho_\eta} {\partial\theta_i} &= \sum_a \frac {\partial\pi_\theta(a|x)} {\partial\theta_i} \pa{r(x,a) - \frac 1 \eta \log 
\frac{\pi_\theta(a|x)}{\pi'(a|x)} - \trho_\eta(\theta)+ \sum_y P(y|x,a) V_\eta^{\pi_\theta}(y) - \frac 1 \eta}\\
 &+ \sum_a\pi_\theta(a|x) \sum_y P(y|x,a) \frac {\partial V_\eta^{\pi_\theta}(y)} {\partial\theta_i} -\frac {\partial 
V_\eta^{\pi_\theta}(x)} {\partial\theta_i}\\
 &= \sum_a \frac {\partial\pi_\theta(a|x)} {\partial\theta_i} A_\eta^{\pi_\theta}(x,a) + \sum_a\pi_\theta(a|x) \sum_y P(y|x,a) \frac 
{\partial V_\eta^{\pi_\theta}(y)} {\partial\theta_i} -\frac {\partial V_\eta^{\pi_\theta}(x)} {\partial\theta_i}.
\end{align*}
The last equality follows from the fact that since $\sum_a \partial\pi_\theta(a|x) / \partial\theta_i=0$ for each $x$, we can add any 
state-dependent constant to the multiplier of $\partial\pi_\theta(a|x)/\partial\theta_i$; adding the term $\frac 1 \eta- 
V_\eta^{\pi_\theta}(x)$ results in the given expression. Summing both sides over the stationary state distribution $\nu_{\pi_\theta}$ yields
\begin{align*}
\frac {\partial\trho_\eta} {\partial\theta_i} &= \sum_x\nu_{\pi_\theta}(x)\frac {\partial\trho_\eta} {\partial\theta_i} = \sum_{x,a} 
\nu_{\pi_\theta}(x)\frac {\partial\pi_\theta(a|x)} {\partial\theta_i} A_\eta^{\pi_\theta}(x,a)\\
 & \hspace*{2.65cm} + \sum_y\sum_{x,a}\nu_{\pi_\theta}(x)\pi_\theta(a|x)P(y|x,a) \frac {\partial V_\eta^{\pi_\theta}(y)} {\partial\theta_i} 
- \sum_x\nu_{\pi_\theta}(x)\frac {\partial V_\eta^{\pi_\theta}(x)} {\partial\theta_i}\\
 &= \sum_{x,a} \nu_{\pi_\theta}(x)\frac {\partial\pi_\theta(a|x)} {\partial\theta_i} A_\eta^{\pi_\theta}(x,a) + 
\sum_y\nu_{\pi_\theta}(y)\frac {\partial V_\eta^{\pi_\theta}(y)} {\partial\theta_i} - \sum_x\nu_{\pi_\theta}(x)\frac {\partial 
V_\eta^{\pi_\theta}(x)} {\partial\theta_i}\\
 &= \sum_{x,a} \nu_{\pi_\theta}(x)\frac {\partial\pi_\theta(a|x)} {\partial\theta_i} A_\eta^{\pi_\theta}(x,a).
\end{align*}
To conclude the proof it is sufficient to note that
\[
\mu_{\pi_\theta}(x,a)\frac {\partial \log \pi_\theta(a|x)} {\partial\theta_i} = \frac {\mu_{\pi_\theta}(x,a)} {\pi_\theta(a|x)} \frac 
{\partial \pi_\theta(a|x)} {\partial\theta_i} = \nu_{\pi_\theta}(x)\frac {\partial\pi_\theta(a|x)} {\partial\theta_i}.
\]
\jmlrQED

\subsection{The closed form of the TRPO update}\label{app:TRPO}
Here we derive the closed-form solution of the TRPO update. To do so, we first briefly summarize the mechanism of 
the algorithm. The main idea of \citet{schulman2015trust} is replacing $\rho(\mu')$ by the surrogate
\[
 L^{\pi}(\pi') = \rho(\pi) + \sum_{x} \nu_\pi(x) \sum_a \pi'(a|x) A^\pi_\infty(x,a),
\]
where $A^\pi_\infty$ is the unregularized advantage function corresponding to policy $\pi$. \footnote{This form is inspired by 
the well-known identity we state as Lemma~\ref{lem:PD}.}
Furthermore, TRPO uses the regularization term
\[
 \DD{\mu}{\mu'}{\mbox{\tiny{TRPO}}} = \sum_{x} \nu_{\mu'}(x) \sum_a \pi_{\mu}(a|x) \log \frac{\pi_{\mu}(a|x)}{\pi_{\mu'}(a|x)}.
\]
The difference between $L + D_{\mbox{\tiny{TRPO}}}$ and $\rho + D_C$ is that the approximate version ignores the impact of changing the 
policy $\pi$ on the stationary distribution. Given this surrogate objective, TRPO approximately computes the distribution
\begin{equation}\label{eq:TRPO}
 \mu_{k+1} = \argmax_{\mu\in\Delta} \ev{L_{\mu_k}(\mu) - \frac 1 \eta \DD{\mu}{\mu_k}{\mbox{\tiny{TRPO}}}}.
\end{equation}
Observing that the TRPO policy update can be expressed equivalently as
\begin{align*}
 \pi_{k+1} &= \argmax_{\pi} \ev{\sum_{x} \nu_{\mu_k}(x) \sum_a \pi(a|x) \pa{A_{\infty}^{\pi_k}(x,a) - \frac 1 \eta \log 
\frac{\pi(a|x)}{\pi_k(a|x)}}},
\end{align*}
we can see that the policy update can be expressed in closed form as
\[
 \pi_{k+1}(a|x) \propto \pi_{k}(a|x) e^{\eta A_{\infty}^{\pi_k}(x,a)}.
\]
This update then can be seen as a regularized greedy step with respect to the value function of the previous policy $\pi_k$.

\end{document}